\newtheorem{theorem}{Theorem}%  meant for continuous numbers
\newtheorem{definition}{Definition}
\newtheorem{proof}{Proof}
\numberwithin{equation}{section}
\begin{document}

\title{RM-Dijkstra: A surface optimal path planning algorithm based on Riemannian metric}

\author{Yu Zhang,~\IEEEmembership{}Xiao-Song Yang~\IEEEmembership{}
        % <-this % stops a space
\thanks{Yu Zhang is with the School of Mathematics and Statistics, Huazhong University of Science and Technology, Wuhan 430074, China (email: zhangyu348@hust.edu.cn)}% <-this % stops a space
\thanks{Xiao-Song Yang is with the School of Mathematics and Statistics, Huazhong University of Science and Technology, Wuhan 430074, China, and also with Hubei Key Laboratory of Engineering Modeling and Scientific Computing, Huazhong University of Science and Technology, Wuhan 430074, China (email: yangxs@hust.edu.cn)}}

% The paper headers
\markboth{}%
{Shell \MakeLowercase{\textit{et al.}}: A Sample Article Using IEEEtran.cls for IEEE Journals}

%\IEEEpubid{0000--0000/00\$00.00~\copyright~2021 IEEE}
% Remember, if you use this you must call \IEEEpubidadjcol in the second
% column for its text to clear the IEEEpubid mark.

\maketitle
\pagestyle{empty}  % no page number for the second and the later pages
\thispagestyle{empty} % no page number for the first page

\begin{abstract}
The Dijkstra algorithm is a classic path planning method, which operates in a discrete graph space to determine the shortest path from a specified source point to a target node or all other nodes based on non-negative edge weights. Numerous studies have focused on the Dijkstra algorithm due to its potential application. However, its application in surface path planning for mobile robots remains largely unexplored. In this letter, a surface optimal path planning algorithm called RM-Dijkstra is proposed, which is based on Riemannian metric model. By constructing a new Riemannian metric on the 2D projection plane, the surface optimal path planning problem is therefore transformed into a geometric problem on the 2D plane with new Riemannian metric. Induced by the standard Euclidean metric on surface, the constructed new metric reflects environmental information of the robot and ensures that the projection map is an isometric immersion. By conducting a series of simulation tests, the experimental results demonstrate that the RM-Dijkstra algorithm not only effectively solves the optimal path planning problem on surfaces, but also outperforms traditional path planning algorithms in terms of path accuracy and smoothness, particularly in complex scenarios.        
\end{abstract}

\begin{IEEEkeywords}
Mobile Robots; Optimal path planning; Riemannian metric; Surface.
\end{IEEEkeywords}

\section{Introduction}
\label{sec:introduction}

\IEEEPARstart{P}{ath} planning has long been a central topic in robotics research\cite{1motionplanning}\cite{2handbookofrobotics}\cite{3ral} and autonomous driving\cite{4autonomousdriving}. It aims to find a collision-free route from the start point to the target destination, avoiding obstacles along the way. Path-planning algorithms are now extensively applied across diverse fields, including robot navigation\cite{5robotnavigation}, logistics and warehousing\cite{6logisticsandwarehousing}, urban planning and traffic management\cite{7urbanplanningandtrafficmanagement}, game development\cite{8gamedevelopment} and so on.
Path planning algorithms can be classified according to their working principles or core ideas\cite{9zongshu}. The most widely used ones are graph-based path planning algorithms and sampling-based path planning algorithms.

Graph-based path planning algorithms discretize the environment into nodes and edges, using graph search methods to find the shortest path from the start point to the goal. Common algorithms, such as Dijkstra and A*, are used for unweighted and weighted graphs with heuristic functions, respectively, and are widely applied in path planning problems in 2D or 3D spaces.
The Dijkstra algorithm\cite{10Dijkstra} is a shortest-path algorithm that systematically explores a graph by selecting the node with smallest tentative distance, ensuring an optimal solution for graphs with non-negative edge weights. 
Due to its strong theoretical foundation, high efficiency, and simplicity of implementation, the Dijkstra algorithm has become one of the most recognized algorithms in computer science and mathematics. It is a cornerstone of graph theory, and many more complex algorithms are built upon its principles or refine it (such as A* and Bellman-Ford) to tackle a broader range of path planning problems.
In the latest breakthrough research article\cite{12UniversalOptimalityofDijkstra}, this classic Dijkstra algorithm has been proven to be universally optimal, which means that no matter how complex the graph structure is, it can achieve theoretically optimal performance even in the worst case!
The A* algorithm\cite{11Astar} improves upon Dijkstra by incorporating a heuristic function to estimate the cost to the goal, thereby improving path-planning efficiency by prioritizing nodes that are more likely to lead to the goal, while still guaranteeing optimality when using an admissible heuristic.

However, graph-based path planning algorithms are mainly used to solve the shortest path problem in discrete space, and cannot be directly applied to path planning problems in continuous space\cite{14CDT-Dijkstra}, such as surfaces. 
Although the traditional Dijkstra algorithm can be applied by discretizing the continuous space into a grid\cite{13}, discretization will result in a loss of computational accuracy, especially in high-dimensional space, where the number of grids may be very large, resulting in high computational complexity.
In particular, when we use the traditional Dijkstra algorithm to solve the path planning problem on a two-dimensional surface in three-dimensional space, the most serious disadvantage is usually the inability to naturally deal with the constraints of the surface, that is, the traditional Dijkstra algorithm cannot consider the geometric characteristics and continuity of the surface.
This is a fundamental problem because the quality and feasibility of path planning depends heavily on the algorithm's understanding of surface morphology. If the surface is not adequately modeled and processed, the path generated by the traditional Dijkstra algorithm may be wrong or not feasible.
On a two-dimensional surface in three-dimensional space, the shortest path should generally not be a straight line or a connection of discrete points, but a smooth curve along the surface. 
Since the geometric properties of the surface (such as bending and curvature) cannot be effectively represented by Euclidean distance, traditional Dijkstra algorithm fails to take into account the intrinsic characteristics of these surfaces, resulting in calculated paths that may deviate from the surface or produce unnatural distortions on the surface.

In recent years, the concept of Riemannian metric has gradually gained attention in the field of path planning\cite{15liu2015robotic}\cite{16kularatne2016time}. 
Riemann metric is a metric structure defined on a manifold, which gives the concepts of distance and angle to the tangent space on the manifold in the form of an inner product\cite{17riemanniangeometry}\cite{18riemannian}. It provides a way to measure the length of a curve, the angle between vectors, and the curvature of a surface, thereby generalizing the geometric concepts in Euclidean space to curved space.
With the Riemannian metric, it is possible to calculate the distance between two points on a manifold and define geodesics as the shortest paths that follow the curvature and geometric properties of the manifold. Riemann metric is applicable not only to flat Euclidean space, but also to non-Euclidean space with curvature, and is widely used in path planning problems\cite{19} in complex geometric environments.

In this letter, we apply the idea of Riemannian metric to a graph-based path planning algorithm to solve the optimal path planning problem on surfaces. By constructing a new Riemannian metric on the 2D projection plane, the surface optimal path planning problem is therefore transformed into a geometric problem on the 2D projection plane with new Riemannian metric. The projection map is an isometric immersion, which ensures that the length of a curve on surface is equal to the length of a new curve on its 2D projection plane in the new metric sense. We propose a surface optimal path planning algorithm called RM-Dijkstra in this letter, which can accurately plan shortest smooth path from one point to target point on surfaces. A series of simulation experiments in scenarios with different surface curvature are presented to verify the validity of RM-Dijkstra. We also conduct comparative experiments with original Dijkstra algorithm using Euclidean distance and original A* algorithm using Euclidean distance to strengthen our results. 
Since the path retrieved by RM-Dijkstra on the surface is the preimage of the polyline path on 2D projection plane (in the new metric sense) under the projection map, the path output by the RM-Dijkstra algorithm has a natural piecewise smooth property and is closely attached to the surface, as shown in section \ref{sec:Simulation}.
The experimental results show that the RM-Dijkstra algorithm effectively solves the optimal path planning problem on surfaces and significantly outperforms traditional path planning algorithms, particularly in terms of path accuracy and piecewise smoothness, especially in complex scenarios.

The rest of this letter is organized as follows. The preliminaries and methodology are presented in Section \ref{sec:Preliminaries and Methodology}. Details of the RM-Dijkstra algorithm are presented in Section\ref{sec:Algorithm}. The experimental results are shown in Section\ref{sec:Simulation}, which are followed by conclusions in Section\ref{sec:Conclusion}

%%%%%%%%%%%%%%%%%%%%%%%%%%%%%%%%%%%%%%%%%%%%%%%%%%%%%%%%%
\section{Preliminaries and Methodology}
\label{sec:Preliminaries and Methodology}

\subsection{Riemannian geometry}
In this subsection, we present a theoretical framework for path planning. The framework is mainly based on Riemannian Geometry as follows.

The Riemannian geometry\cite{17riemanniangeometry} is a branch of mathematics that studies the geometric properties of manifolds equipped with a Riemannian metric. A manifold is a topological space that is locally homeomorphic to Euclidean space, and surfaces are two-dimensional manifolds. Each point on a manifold has a tangent space, which consists of the tangent vectors of curves passing through that point. The tangent space provides a tool for describing the local geometry of the manifold. A Riemannian metric is an inner product structure defined on the manifold that provides an inner product on each tangent space, allowing us to define quantities such as curve length, angles, and to analyze curvature and other geometric properties. An isometric immersion is a smooth map that locally embeds a manifold into another manifold while preserving distances locally. An isometric embedding is a special case of isometric immersion, where the map is not only isometric but also injective , preserving the manifold's topological structure

\begin{definition}
The tensor $g$ is referred to as the Riemannian metric on $M$, if $g$ satisfies both symmetry and positive definiteness.

(1)symmetry: $g(u,v)=g(v,u)$, $\forall u,v\in T_{P}M $

(2)positive definiteness: $g(u,u)\ge$ 0 for all $ u\in$ $T_{p}M $, where the equal sign holds if and only if $u = 0$
\end{definition}

\begin{definition}{(isometric immersion)}
An isometric immersion between Riemannian manifolds $(M,g)$ and $(N,h)$ is a smooth mapping $f: M\to N$ such that $g=f^{\ast } h$,i.e. $h(f_{\ast }(v), f_{\ast }(w))=g(v,w)$ for all tangent vectors $v,w\in T_{p}M $ and all $p\in M$
\end{definition}

Now, we consider a height surface in $\mathbb{R}^3$, which is represented by a smooth function $x_3: \mathbb{R}^2 \to \mathbb{R}$. The surface can be parametrized as $X_h: (x_1, x_2)\mapsto (x_1, x_2, x_3(x_1, x_2))$, where $x_1$ and $x_2$ are parameters on the plane. The surface $M= X_h(\mathbb{R}^2)$  is naturally covered by the chart $X_h^{-1} $, which maps the height surface back to the plane. To determine the basis vectors of the tangent plane at each point on the surface, we compute the partial derivatives of the surface. Let $P=(x_1^P, x_2^P, x_3^P)$ be a point on the surface. 
To compute the first basis vector, we fix $x_2 = x_2^P$ and vary $x_1$ infinitesimally (e.g. $x_1 = x_1^P + \varepsilon $ , where $\varepsilon $ is an infinitesimal quantity). The corresponding change in the surface height is $ \triangle x_3=x_3(x_1^P+\varepsilon,x_2^P)-x_3 (x_1^P, x_2^P)$. 
The components of the displacement in the $(x_1, x_2, x_3)$-coordinates are then  $\triangle x_1=\varepsilon$, $\triangle x_2=0$, and $\triangle x_3= \triangle x_3$. Therefore, the first basis vector of the tangent plane is given by $ \vec{e_1}=(1,0,\frac{\partial x_3}{\partial x_1} )$. Similarly, the second basis vector is $\vec{e_2}=(0,1,\frac{\partial x_3}{\partial x_2} )$. 
With these two basis vectors, any vector in the tangent plane can be expressed as a linear combination of $\vec{e_1}$ and $\vec{e_2}$. For a vector with components  $(v_1, v_2, v_3)$ in the tangent plane, it can be written as $$(v_1, v_2, v_3)= v_1\cdot (1,0,\frac{\partial x_3}{\partial x_1} )+ v_2\cdot (0,1,\frac{\partial x_3}{\partial x_2} ) $$
It follows that the third component of the vector is $$v_3= v_1\cdot \frac{\partial x_3}{\partial x_1}+v_2\cdot \frac{\partial x_3}{\partial x_2}$$. 
This provides the full expression for any vector in the tangent plane at the point $P$ on the height surface.

\subsection{Methodology}
In this section, we propose a general framework for optimal path planning by defining a new Riemannian metric on the projection plane $\mathbb{R}^{2}=\left\{\left(x_{1}, x_{2}\right) \mid x_{1}, x_{2} \in \mathbb{R}\right\} $. This metric is induced by the standard Euclidean metric on surface embedded in $\mathbb{R}^3 $, providing a new approach to solving surface path planning problems.

The robot's motion considered in the present paper can be modeled on a 2D manifold represented by the height function $x_3 (x_1,x_2)$ in 3D space. 
Denote the two-dimensional smooth manifold $M$ as $$\begin{aligned}
\vec{s}&: U \subset\mathbb{R}^2 \rightarrow \mathbb{R}^3 \\
\vec{s}\left(x_1, x_2\right)&=\left(x_1, x_2, x_3\left(x_1, x_2\right)\right)
\end{aligned}$$
where $U$ is a open set in $\mathbb{R}^2$. Obviously, $\vec{s}\left( U \right)$ is embedded in the three-dimensional space $\mathbb{R}^3$,where the standard Euclidean metric can be denoted as $g_{ij}$. The following is about its calculation: 
$$\begin{aligned}
g_{11}&=g((1,0,0),(1,0,0))=1 \\
g_{12}&=g((1,0,0),(0,1,0))=0
\end{aligned}$$
Similarly, 
$g_{ii}=1 (i=1,2,3) $ and $g_{ij}=0 (i\ne j)$.

Finally we have  
$g=g_{i j} d x^{i} \otimes d x^j=g_{ij} d x^i d x^j =g_{11} d x^1 d x^1+g_{22} d x^2 d x^2+g_{33} d x^3 d x^3 = d x^1 d x^1+d x^2 d x^2+d x^3 d x^3 $. Choose two vectors on the tangent plane $T_{p}M $ at any point $p$ on the manifold $\vec{s}(U)$ as follows:
$$\begin{aligned}
&\vec{c_1}=\left(1,0, \frac{\partial x_3}{\partial x_1}\right)  , 
&\vec{c_2}=\left(0,1, \frac{\partial x_3}{\partial x_2}\right) 
\end{aligned}$$
Now we can construct a new Riemannian metric $h=h_{i j} d x^i d x^j=h_{11} d x^1 d x^1 + h_{12} d x^1 d x^2 + h_{21} d x^2 d x^1 + h_{22} d x^2 d x^2$ on the projection plane $\mathbb{R}^2 $.
Let 
$$
\begin{aligned}
h_{11}&=g(\vec{c_1}, \vec{c_1})=1+\left(\frac{\partial x_3}{\partial x_1}\right)^2\\
h_{12}&=h_{21}=g( \vec{c_1}, \vec{c_2} )=\frac{\partial x_{3}}{\partial x_{1}} \cdot \frac{\partial x_3}{\partial x_2}\\
h_{22}&=g( \vec{c_2}, \vec{c_2} )=1+\left(\frac{\partial x_3}{\partial x_2}\right)^2
\end{aligned}
$$

\noindent Thus the Metric matrix $[h_{ij}]$ can be written as 
$$
\large \left[h_{ij}\right]=\begin{bmatrix}
    1+\left(\frac{\partial x_3}{\partial x_1}\right)^2 & \frac{\partial x_3}{\partial x_1} \cdot \frac{\partial x_3}{\partial x_2}\\
   \frac{\partial x_3}{\partial x_1} \cdot \frac{\partial x_3}{\partial x_2} & 1+\left(\frac{\partial x_3}{\partial x_2}\right)^2 
\end{bmatrix}
$$

Furthermore, the inverse matrix can be calculated by
$\left[h_{i j}\right]^{-1} \triangleq\left[h^{k l}\right]$

$${\large \left[h^{k l}\right]=\frac{1}{d}\begin{bmatrix}
1+\left(\frac{\partial x_3}{\partial x_2}\right)^2&-\frac{\partial x_3}{\partial x_1} \cdot \frac{\partial x_3}{\partial x_2} \\
-\frac{\partial x_3}{\partial x_1} \cdot \frac{\partial x_3}{\partial x_2} & 1+\left(\frac{\partial x_3}{\partial x_1}\right)^2
\end{bmatrix}
} $$
here
$$
\begin{aligned}
  d= \det(h_{ij})=&\left[1+\left(\frac{\partial x_3}{\partial x_1}\right)^2\right] \cdot \left[1+\left(\frac{\partial x_3}{\partial x_2}\right)^2\right] \\
  &-\left( \frac{\partial x_3}{\partial x_1} \cdot \frac{\partial x_3}{\partial x_2}\right)^{2}  
\end{aligned}  
$$

We also point out that the projection from this two-dimensional smooth manifold $\vec{s}$ to projection plane $\mathbb{R}^2$ is an isometric map.
\begin{theorem}\label{thm2}
The two-dimensional smooth manifold $M$ in $\mathbb{R}^3$ with a metric induced by the standard Euclidean metric $g=d x^1 d x^1+d x^2 d x^2+d x^3 d x^3$ is a Riemannian manifold. If its projection plane $\mathbb{R}^2=\left(x_{1}, x_{2}\right)$ is equipped with a new Riemannian metric $h=h_{i j} d x^i d x^j$ as above, then the Projection mapping $f: M \rightarrow \mathbb{R}^2$ is an isometric immersion(embedding).
\end{theorem}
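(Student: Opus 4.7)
The key observation is that the metric $h$ on $\mathbb{R}^2$ was defined precisely so that the parameterization $\vec{s}: U \to M$ is an isometry between $(U, h)$ and $(M, g|_M)$; the projection $f$ is the inverse of $\vec{s}$, so everything will collapse to a direct verification of the isometry condition $h(f_{\ast}v, f_{\ast}w) = g(v,w)$ from the definition of $h_{ij}$.

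First, I will set up the maps carefully. The parameterization $\vec{s}(x_1,x_2) = (x_1, x_2, x_3(x_1,x_2))$ is a smooth bijection from $U$ onto $M = \vec{s}(U)$, with smooth inverse $f := \vec{s}^{-1}$ given by $f(x_1,x_2,x_3) = (x_1,x_2)$, namely the projection. Computing the differential of $\vec{s}$ at a point, its columns are exactly the basis vectors $\vec{c_1} = (1,0,\partial x_3/\partial x_1)$ and $\vec{c_2} = (0,1,\partial x_3/\partial x_2)$ spanning $T_pM$. By the inverse function theorem, $f_{\ast}$ at $p = \vec{s}(x_1,x_2)$ sends $\vec{c_i} \mapsto \partial/\partial x_i$ on $\mathbb{R}^2$.

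Next, I will check the metric compatibility. Any tangent vector $v \in T_pM$ decomposes uniquely as $v = v^1 \vec{c_1} + v^2 \vec{c_2}$, and likewise $w = w^1 \vec{c_1} + w^2 \vec{c_2}$. Using bilinearity of $g$ and the definition $h_{ij} = g(\vec{c_i}, \vec{c_j})$, one has
\begin{equation*}
g(v,w) = \sum_{i,j=1}^{2} v^i w^j \, g(\vec{c_i}, \vec{c_j}) = \sum_{i,j=1}^{2} v^i w^j h_{ij}.
\end{equation*}
On the other hand, $f_{\ast}v = (v^1, v^2)$ and $f_{\ast}w = (w^1, w^2)$ in the standard basis of $T_{f(p)}\mathbb{R}^2$, so by the definition of $h$,
\begin{equation*}
h(f_{\ast}v, f_{\ast}w) = \sum_{i,j=1}^{2} v^i w^j h_{ij}.
\end{equation*}
Comparing the two shows $f^{\ast}h = g|_M$, which is exactly the isometric immersion condition in Definition 2.

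Finally, I will upgrade immersion to embedding. Since $\dim M = \dim \mathbb{R}^2 = 2$ and $f_{\ast}$ maps the basis $\{\vec{c_1}, \vec{c_2}\}$ to the basis $\{\partial/\partial x_1, \partial/\partial x_2\}$, it is a linear isomorphism on every tangent space, so $f$ is an immersion. It is also injective (being the inverse of $\vec{s}$) and a homeomorphism onto its image $U$, so it is an embedding. There is no real obstacle in this argument; the only subtlety I would flag is being careful about the direction of the map, since the metric $h$ is literally the pullback $\vec{s}^{\ast}g$, and $f$ being the inverse means $f^{\ast}h = f^{\ast}\vec{s}^{\ast}g = (\vec{s}\circ f)^{\ast}g = g|_M$ automatically, which can be used as a one-line conceptual proof if the bilinear calculation above is deemed too explicit.
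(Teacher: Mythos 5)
Your proposal is correct, and its core is the same as the paper's: both verify the defining condition $h(f_{\ast}v, f_{\ast}w) = g(v,w)$ pointwise on tangent vectors. The difference is in execution. The paper writes $v=(v_1,v_2,v_3)$ in ambient coordinates, substitutes the explicit formulas for $h_{11}, h_{12}, h_{22}$ in terms of $\partial x_3/\partial x_1$ and $\partial x_3/\partial x_2$, uses the tangency relation $v_3 = v_1\,\partial x_3/\partial x_1 + v_2\,\partial x_3/\partial x_2$ to expand $g(v,w)=v_1w_1+v_2w_2+v_3w_3$, and checks that the two expanded expressions coincide term by term. You instead decompose $v = v^1\vec{c_1}+v^2\vec{c_2}$ and invoke bilinearity together with the defining relation $h_{ij}=g(\vec{c_i},\vec{c_j})$, so both sides are $\sum_{i,j} v^i w^j h_{ij}$ with no expansion needed; your closing observation that $h=\vec{s}^{\ast}g$ forces $f^{\ast}h = (\vec{s}\circ f)^{\ast}g = g|_M$ is the same idea compressed to one line. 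Your version makes it transparent that the identity holds by construction rather than by coincidence of two computations, at the cost of hiding the explicit formulas the paper later reuses in its length functional (II.3). You also supply something the paper omits entirely: the verification that $f$ is actually an immersion (the differential carries a basis to a basis, hence is injective) and an embedding (bijectivity and homeomorphism onto $U$), whereas the paper's proof only establishes metric preservation and never addresses the immersion or injectivity claims in the theorem statement. One cosmetic point: you do not need the inverse function theorem to see $f_{\ast}\vec{c_i}=\partial/\partial x_i$, since $f$ is the restriction of a linear projection and its differential can be applied to $\vec{c_i}$ directly.
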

\begin{proof}
    We showed in Definition 2 that a diffeomorphism is isometric if it satisfies $g=f^{\ast } h$. Hence, it remains to be shown that $h(f_{\ast }(v), f_{\ast }(w))=g(v,w)$ for all tangent vectors $v,w\in T_{p}M $ and all $p\in M$. Let $v=(v_1,v_2,v_3),w=(w_1,w_2,w_3)$. The left side equals (II.1)

\begin{align}
    &h(f_{\ast }(v), f_{\ast }(w))=  h\left(\left(v_{1}, v_{2}\right),\left(w_1, w_2\right)\right) \notag\\
    &=  h_{11} v_1 w_1+h_{12} v_1 w_2+h_{21} v_2 w_1+h_{22} v_2 w_2 \notag\\
    &=  v_1 w_1\left[1+\left(\frac{\partial x_3}{\partial x_1}\right)^2\right] + v_2 w_2 \left[1+\left(\frac{\partial x_3}{\partial x_2}\right)^2\right]\notag\\
    &+\left(v_1 w_2+v_2 w_1\right) \left( \frac{\partial x_3}{\partial x_1}\cdot\frac{\partial x_3}{\partial x_2}\right) 
\end{align}

Since 
$\left(v_1, v_2, v_3\right)=\left(v_1, 0, v_1 \frac{\partial x_3}{\partial x_1} \right)+\left(0, v_2, v_2\frac{\partial x_3}{\partial x_2}\right)$, then $v_3$ has an expression $v_3=v_1\frac{\partial x_3}{\partial x_1}+v_2 \frac{\partial x_3}{\partial x_2}$. Similarly, $w_3=w_1\frac{\partial x_3}{\partial x_1}+w_2 \frac{\partial x_3}{\partial x_2}$. Thus the right side equals (II.2)

\begin{align}
& g(v,w)= v_1 w_1+v_2 w_2+v_3 w_3 \notag\\
& = v_1w_1\left[1+\left(\frac{\partial x_3}{\partial x_1}\right)^2\right]  
+ v_2 w_2\left[1+\left(\frac{\partial x_3}{\partial x_2}\right)^2\right]\notag\\
& +\left(v_1 w_2 + v_2 w_1\right)\left( \frac{\partial x_3}{\partial x_1}\cdot\frac{\partial x_3}{\partial x_2}\right)
\end{align} 

We see the left side equals the right side and this completes the proof.
\end{proof}

%%%%%%
Using the proposed Riemannian metric $h=h_{i j} dx^i dx^j$ , the length (cost function) of a smooth curve $\gamma:[a, b] \rightarrow \mathbb{R}^2$ on projection plane $\mathbb{R}^2 $ is obtained by the integral over inner products on $\mathbb{R}^2 $ as shown in (II.3).
\begin{figure*}[t]
\begin{align}
L(\gamma) = &\int_a^b \left|\gamma^{\prime}(t)\right| dt=\int_a^b \sqrt{h\left(\gamma^{\prime}(t), \gamma^{\prime}(t)\right)} dt \notag\\
= &\int_a^b \sqrt{h\left(\left(x_1^{\prime}(t), x_2^{\prime}(t)\right),\left(x_1^{\prime}(t), x_2^{\prime}(t)\right)\right)} dt \notag\\
= &\int_a^b \sqrt{h_{11}\left[x_1^{\prime}(t)\right]^2+h_{12}\left[x_1^{\prime}(t)\cdot  x_2^{\prime}(t)\right]+h_{21}\left[x_2^{\prime}(t) \cdot x_1^{\prime}(t)\right]+h_{22}\left[x_2^{\prime}(t)\right]^2} dt \notag\\
= &\int_a^b \sqrt{h_{11}\left[x_1^{\prime}(t)\right]^2+2 h_{12}\left[x_1^{\prime}(t) \cdot x_2^{\prime}(t)\right]+h_{22}\left[x_2^{\prime}(t)\right]^2} dt \notag\\
= &\int_a^b \sqrt{\left[1+\left(\frac{\partial x_3}{\partial x_1}\right)^2\right] \cdot\left[x_1^{\prime}(t)\right]^2+2\left( \frac{\partial x_3}{\partial x_1}\cdot\frac{\partial x_3}{\partial x_2}\right) \cdot\left[x_1^{\prime}(t)\cdot x_2^{\prime}(t)\right]
+\left[1+\left(\frac{\partial x_3}{\partial x_2}\right)^2\right] \cdot\left[x_2^{\prime}(t)\right]^2}dt\\
\rule[-1pt]{1.2cm}{0.05em} &\rule[-1pt]{16.6cm}{0.05em} \notag
\end{align}

\end{figure*}

%%%%%%%%%%%%%%%%%%%%%%%%%%%%%%%%%%%%%%%%%%%%%%%%%%%%%%%%%
\section{Algorithm}
\label{sec:Algorithm}

In this section, we present an optimal path planning algorithm RM-Dijkstra (Dijkstra based on Riemannian metric) as an example of the application of Riemannian metric in path planning. RM-Dijkstra consists of two stages: Estimation of straight-line distance stage and Path searching stage.

As mentioned in the preliminaries, we define a new Riemannian metric on the two-dimensional plane, which is not a traditional Euclidean metric.
Therefore, a new method is needed to compute the distance between two points on the two-dimensional plane with this new Riemannian metric, as shown in Algorithm 2.
In the pseudocode, a parametric curve from the start point to the end point of a straight line segment on the two-dimensional plane is defined.
$$point(t)=start+t*(end-start)$$
Here t is the scale parameter of the line. t=0 is start point and t=1 is end point.
The expression by calculating the length of the path element is 
$$Integrand(t)=\sqrt{h_{11} \cdot \left ( \Delta x_1 \right ) ^{2} +h_{22} \cdot \left ( \Delta x_2 \right ) ^{2} +2 h_{12} \Delta x_1 \Delta x_2}$$
where $\Delta x_1= end[0]-start[0], \Delta x_2= end[1]-start[1]$, and $h_{ij}$ is the constructed Riemannian metric tensor.
Since the length of the entire line segment is obtained by integrating the length of the path microelement, we invoke the Gauss-Legendre numerical integration method to approximate the integral value, as shown in Algorithm 1.
Gauss-legendre numerical integration method selects Legendre polynomial roots in the interval as sampling points, and combines the corresponding weights to sum the function, so as to approximate the integral value. This method maximizes the approximation accuracy and achieves highly accurate integration with fewer sampling points.
$$\int_{a}^{b} f(x) d x \approx \sum_{i=1}^{n} w_{i}^{\text {mapped }} \cdot f\left(x_{i}^{\text {mapped }}\right)$$

The main part of RM-Dijkstra is shown in Algorithm 3.
Here we take random points in the two-dimensional plane with new metric, and use Algorithm 2 to compute the adjacency matrix of the graph. The rest of the steps are similar to the traditional Dijkstra algorithm.

%%%%%%%%%%%%%%%%%%%%%%%%%%%%%%%%%%%%%%%%%%%%%%%%%
%%%%%%%%%%%%%%%%%%%%%%%%%%%%%%%%%%%%%%%%%%%%%%%%%

%%%%%%%%%%%%%%%%%%%%%%%%%%%%%%%%%%%%%%%%%%%%%%%%%

%%new

\begin{algorithm}
\caption{Gauss-Legendre Integration}
\begin{algorithmic}
\STATE \textbf{Input:} 

\STATE \quad $func \gets$ Function to be integrated;
\STATE \quad $a,b \gets$ Integration bounds;
\STATE \quad $n \gets$ Number of Gauss points;
\STATE \textbf{GaussLegendreIntegrate:} 
\STATE \quad $nodes, weights = leggauss(n)$
\STATE \quad $m\_nodes = 0.5 \cdot (b - a) \cdot nodes + 0.5 \cdot (b + a)$ 
\STATE \quad $m\_weights = 0.5 \cdot (b - a) \cdot weights$ 
\STATE \textbf{return} $\sum_{i=1}^{n} \left(m\_weights[i] \cdot func(m\_nodes[i])\right)$ 
\end{algorithmic}
\end{algorithm}

\begin{algorithm}
\caption{Estimate Line Distance}
\begin{algorithmic}
\STATE \textbf{Input:} 
\STATE \quad $start \gets$ Starting point of the line;
\STATE \quad $end \gets$ Ending point of the line;
\STATE \quad $samples \gets$ Number of Gauss-Legendre points;
\STATE \quad $h_{11} \gets 1 + \left(\frac{\partial S}{\partial x_1}\right)^2, \quad h_{12} \gets \frac{\partial S}{\partial x_1} \frac{\partial S}{\partial x_2},$ 
\STATE \quad $\quad h_{22} \gets 1 + \left(\frac{\partial S}{\partial x_2}\right)^2$

\STATE \textbf{def LineEstimateDist:} 
    \STATE \quad \textbf{def Integrand(t):}
        \STATE \quad \textbf{return}  $ \sqrt{h_{11} \Delta x_1^2 + h_{22} \Delta x_2^2 + 2h_{12} \Delta x_1 \Delta x_2} $
\STATE \textbf{return}  $GaussLegendreIntegrate(Integrand,0,1,samples)$
\end{algorithmic}
\end{algorithm}

\begin{algorithm}
\caption{RM-Dijkstra with Riemannian Metric on a Surface}
\begin{algorithmic}
\STATE \textbf{Input:} 
\STATE \quad $n \gets$ Number of nodes in the graph;
\STATE \quad $start, end \gets$ Starting and ending points for Dijkstra;
\STATE \quad \texttt{graph} $\gets$ Initialize a zero matrix of size $n \times n$;
\STATE $points = random(n)$
\FOR{$i = 0$ to $n-1$}
    \FOR{$j = i+1$ to $n-1$} 
        \STATE $graph[i][j] = LineEstimateDist(points[i], points[j]) $
        \STATE $graph[j][i] = graph[i][j]$
    \ENDFOR
\ENDFOR

\STATE
\textbf{def Dijkstra:} 
\STATE \textbf{Initialization:}
\STATE \quad $distances[start] = 0$;
\STATE \quad $distances[i] = \infty$ for all other $i$;
\STATE \quad $visited =$ False for all nodes;
\WHILE{not all nodes visited}
    \STATE $current\_node = $ node with minimum $distances[]$
    \STATE  $visited[current] = True$
    \FOR{each $neighbor$ of $current\_node$}
        \IF{$graph[current\_node][neighbor] \neq 0$ and $neighbor$ is not visited}
            \STATE $new\_distance = distances[current\_node] + graph[current\_node][neighbor]$
            \IF{$new\_distance < distances[neighbor]$}
                \STATE $distances[neighbor] = new\_distance$
                \STATE $previous[neighbor] = current\_node$
            \ENDIF
        \ENDIF
    \ENDFOR
\ENDWHILE
\STATE \textbf{return}  $Dijkstra(graph, start, end, points)$

\end{algorithmic}
\end{algorithm}

%%%%

\section{Simulation}
\label{sec:Simulation}

In this section, several simulation experiments were conducted to demonstrate the effectiveness and efficiency of RM-Dijkstra algorithm. 
The computer we use is a MacBook. The MacBook setup includes an Apple M3 chip (8-core CPU and 10-core GPU) with 16 GB of unified memory. For the simulation, we use Python on macOS Sonoma.

Comparison experiments with traditional graph search-based algorithms are also included, such as traditional Dijkstra algorithm and traditional Astar algorithm using original Euclidean distance.
In experiments, the working space of these algorithms is uniformly taken as $[-1,11]\times [-1,11]$, including start point $(0,0)$ and end point $(10,10)$. 
By constructing a height map with fluctuating characteristics, the surface function can be used to simulate mountain topography. In this study, we took advantage of the properties of the Gaussian function to create local peaks and then build larger terrain by superimposing multiple such peaks.
For the two-dimensional case, the Gaussian peak function is expressed as follows
$$f(x, y)=A \cdot \exp \left(-\frac{\left(x-x_{0}\right)^{2}+\left(y-y_{0}\right)^{2}}{2 \sigma^{2}}\right)$$
where $\left ( x_{0},y_{0} \right ) $ represents the central position of the Gaussian peak, and $\sigma$ controls the expansion degree of the peak.

%单峰路径图%%%%%%%%%%%%%%%%%%%%%%%%%%%%%%%
\begin{figure}
   \centering
   \subfloat[]{
    \includegraphics[width=0.5\linewidth]{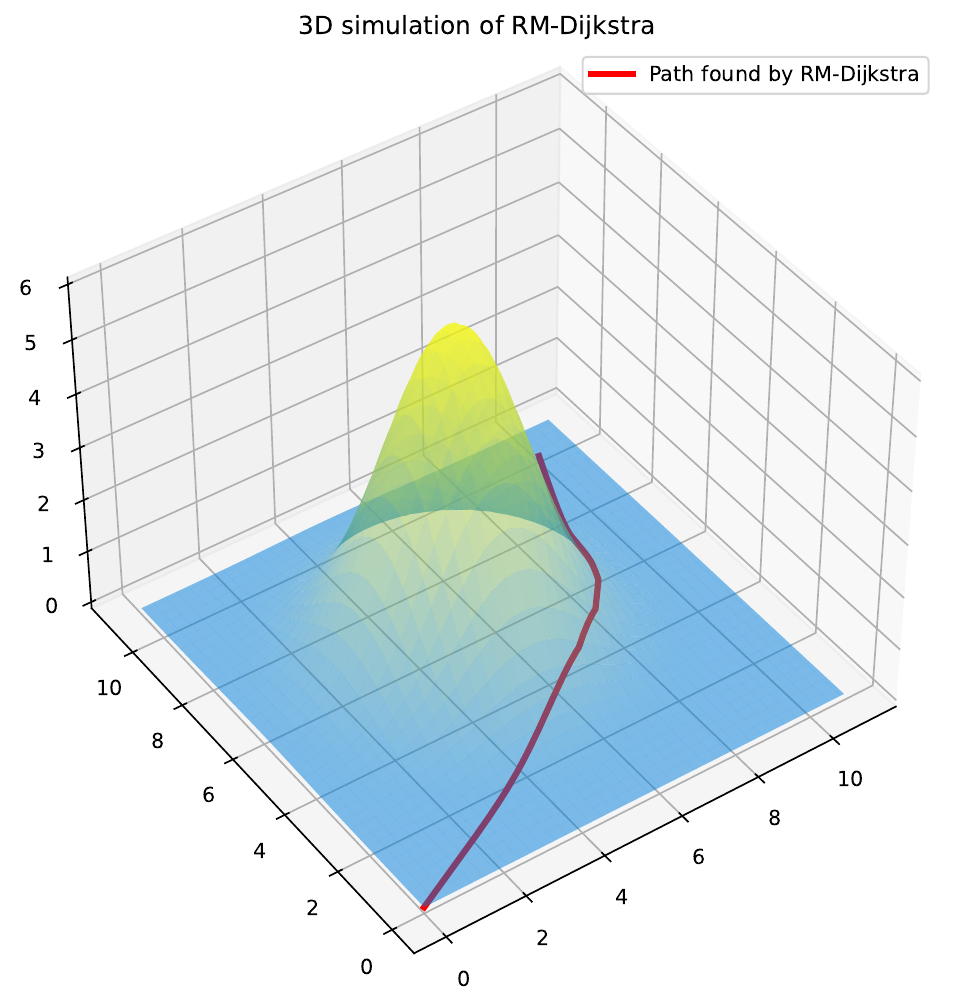}
    \label{3d1-1}
    }
   \subfloat[]{
      \includegraphics[width=0.5\linewidth]{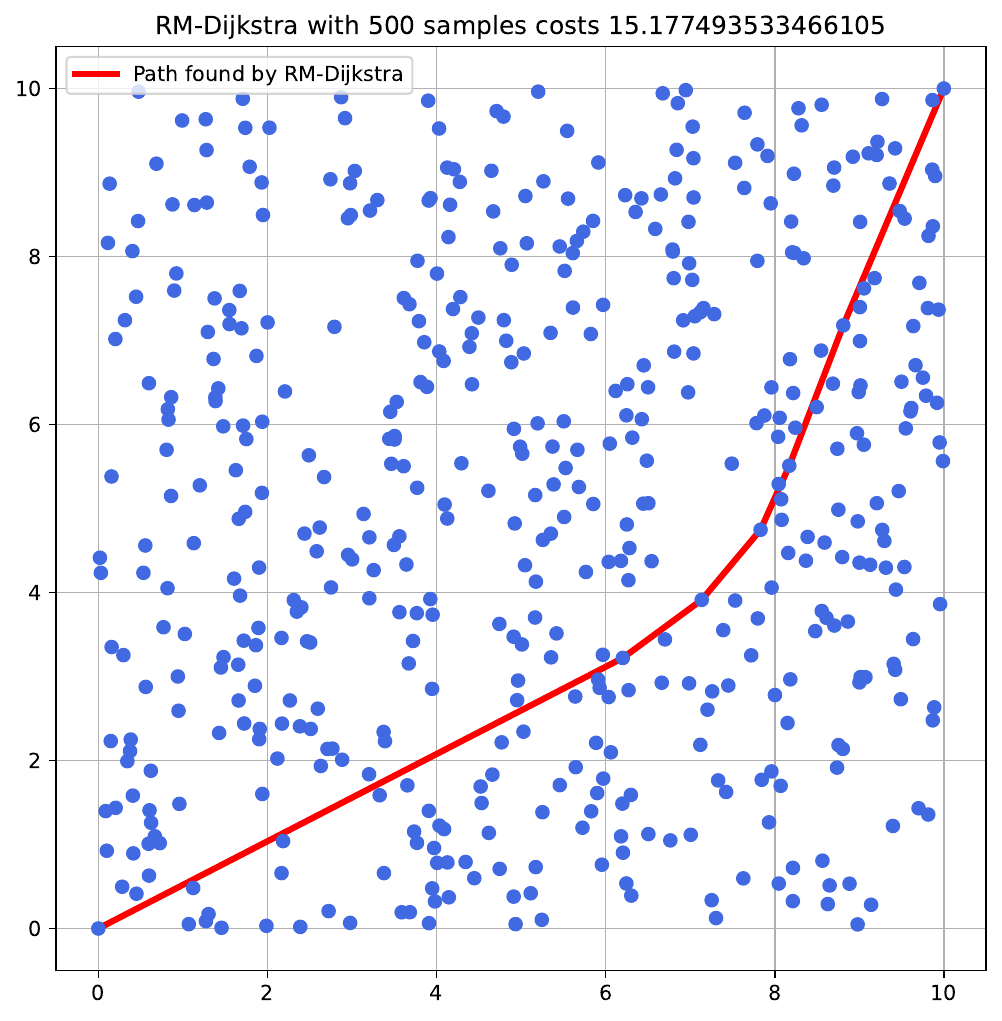}
      \label{3d1-2}
    }\\
    \subfloat[]{
      \includegraphics[width=0.5\linewidth]{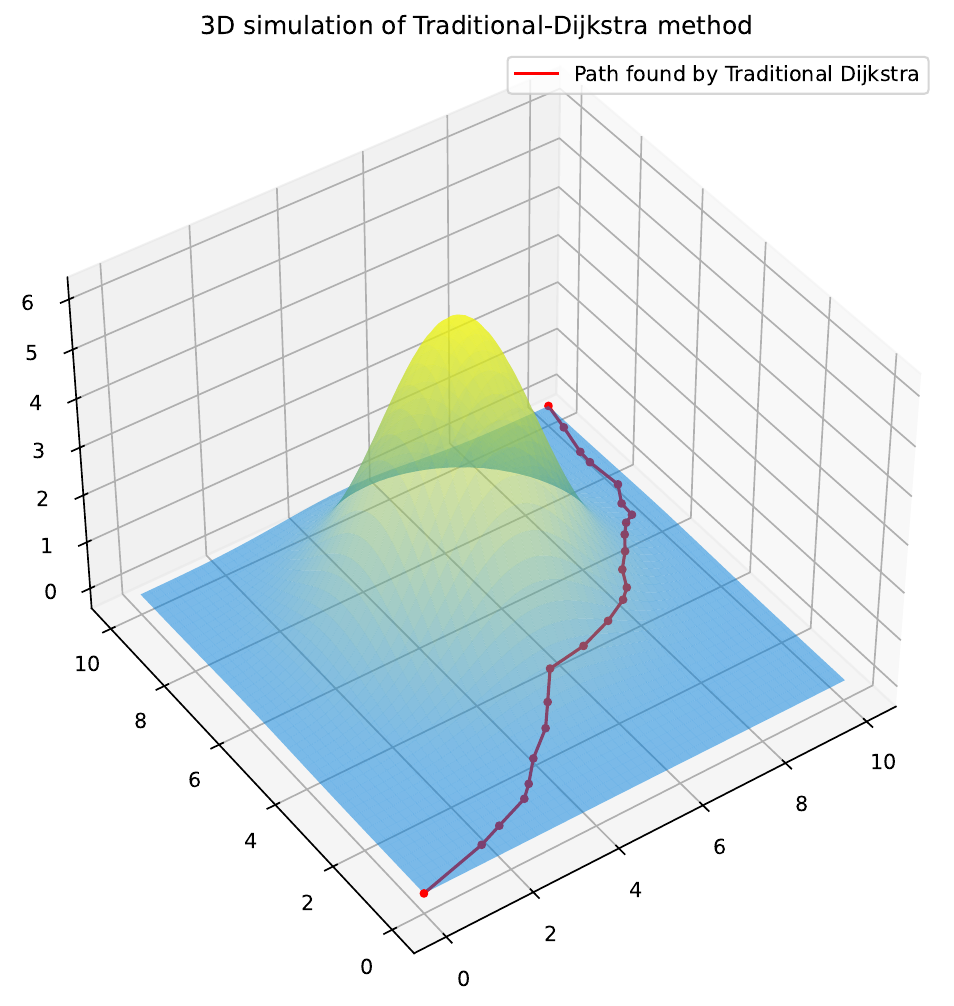}
      \label{3d1-3}
    }
    \subfloat[]{
      \includegraphics[width=0.5\linewidth]{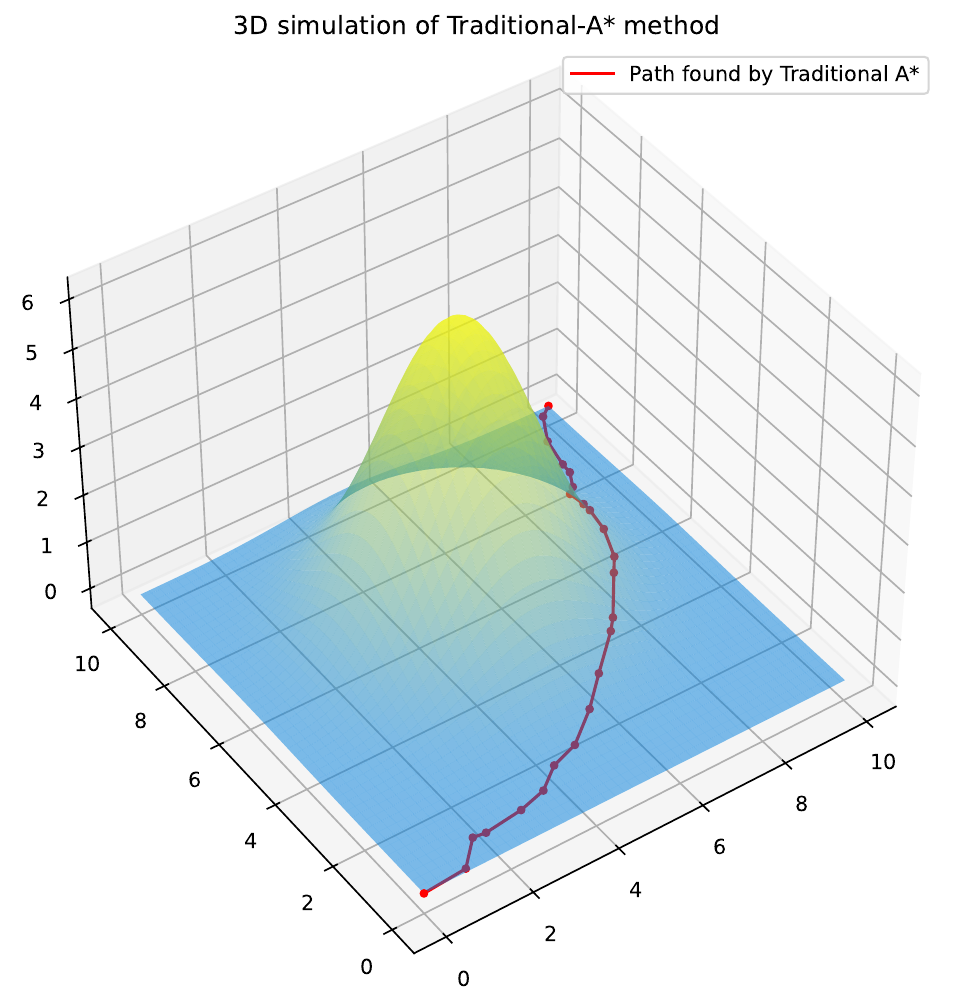}
      \label{3d1-4}
    }
   \caption{Simulation in 3-D scenario (one peak). (b) is obtained by RM-Dijkstra taking 500 random sampling points. (a) is the preimage of the path in (b) under the projection map. (c) is obtained by original Dijkstra algorithm using Euclidean distance. (d) is obtained by original A* algorithm using Euclidean distance}
   \label{onepeak}
\end{figure}

First consider a relatively simple scenario, as shown in Figure \ref{onepeak}, we choose to use the Gaussian function $x_3 = 6\cdot e^{ -\frac{1}{5}\left [(x_1-5)^2+(x_2-6)^2 \right ] } $ to simulate a one-peak surface in three-dimensional space.
Since the RM-Dijkstra algorithm transformed the path planning problem on the surface into a geometry problem on the plane with a new Riemannian metric, we called the RM-Dijkstra algorithm for path planning in the planar workspace, as shown in Figure \ref{3d1-2}.
The length of the path retrieved by RM-Dijkstra on the plane by randomly selecting 500 points is 15.177493. This path is obtained by connecting some line segments from end to end, that is, a broken line.
Mapping this polyline path on the plane onto the single-peak surface through the inverse mapping of projection mapping will result in a piecewise smooth path on the single-peak surface, as shown in Figure \ref{3d1-1}.
Since this path (in Figure \ref{3d1-1}) is the preimage of the broken line (in Figure \ref{3d1-2}) under the projection mapping, the red path finally formed in Figure \ref{3d1-1} is obtained by connecting some smooth curve segments on the single-peak surface successively through various nodes. In other words, The path output by RM-Dijkstra algorithm on a surface in three-dimensional space is naturally piecewise smooth and all the points that constitute the path are points on the surface.

In order to judge the advantages and disadvantages of RM-Dijkstra algorithm and increase its persuasiveness, we select traditional Dijkstra algorithm and traditional A* algorithm as comparative experiments, as shown in Figures \ref{3d1-3} and \ref{3d1-4}.
The comparative experiments both use the classical Euclidean metric of three-dimensional space to calculate the distance, and keep the number of random sampling points consistent with RM-Dijkstra , which is 500 random points.
The path length retrieved by the traditional Dijkstra algorithm is 15.870241, and the path length retrieved by the traditional A* algorithm is 15.766383.
By observing the trend and shape of these paths, it is obvious that the traditional graph search-based algorithms are not as accurate and smooth as the RM-Dijkstra algorithm when solving the problem of surface path planning.As the complexity of the surface increases, the advantages of RM-Dijkstra in these aspects will become more obvious, which will be introduced in detail in the following two sets of experimental scenarios.

The traditional Dijkstra algorithm has great difficulties in dealing with surface constraints, mainly because it is based on graph traversal and distance calculation. However, the geometric characteristics of the surface (such as bending, curvature, etc.) cannot be effectively expressed by simple Euclidean distance.
Since the traditional Dijkstra algorithm does not take into account the geometric characteristics of the surface itself, the calculated path usually deviates from the surface or distorts unnaturally on the surface.
On a two-dimensional surface in three-dimensional space, the shortest path is usually not a simple straight line or discrete connected points, but a continuous curve extending along the surface. The discretized graph structure of the traditional Dijkstra algorithm can not deal with this continuity effectively, so the path results deviate from the requirements of the actual surface.

%三峰路径图%%%%%%%%%%%%%%%%%%%%%%%%%%%%%%%
\begin{figure}
   \centering
   \subfloat[]{
    \includegraphics[width=0.5\linewidth]{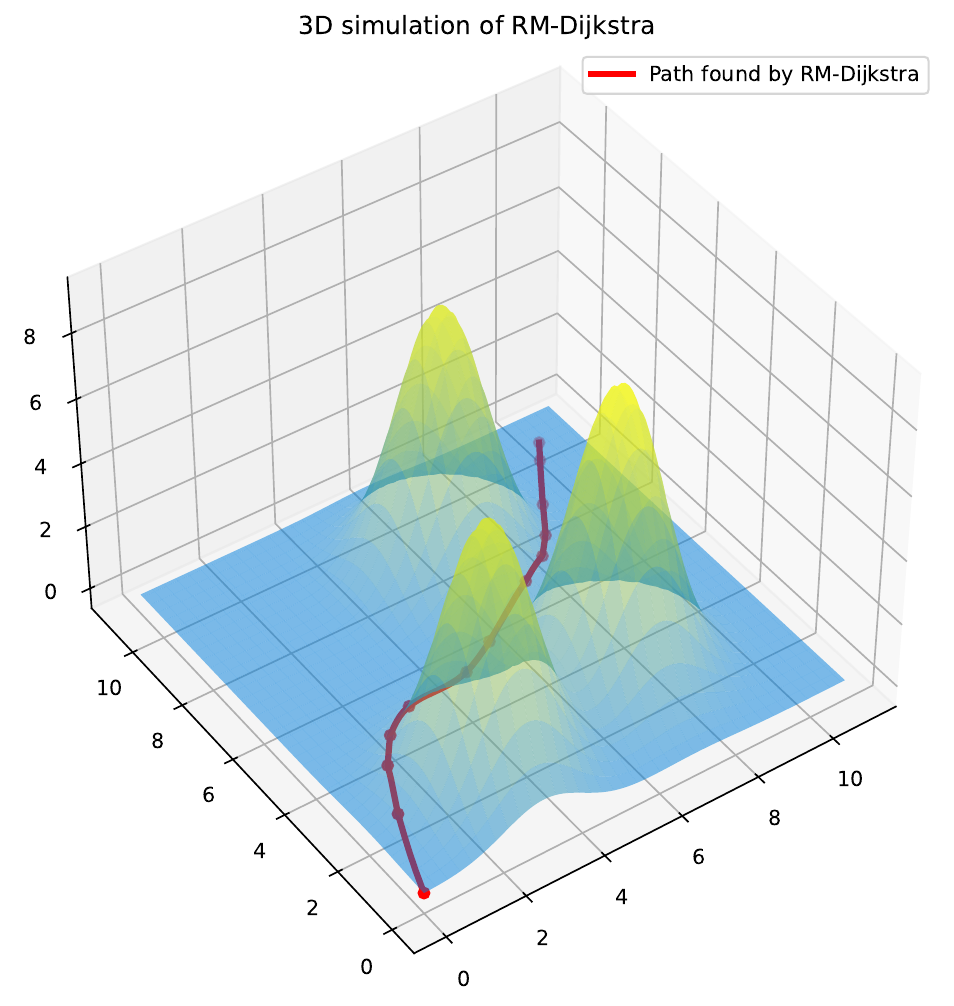}
    \label{3d3-1}
    }
   \subfloat[]{
      \includegraphics[width=0.47\linewidth]{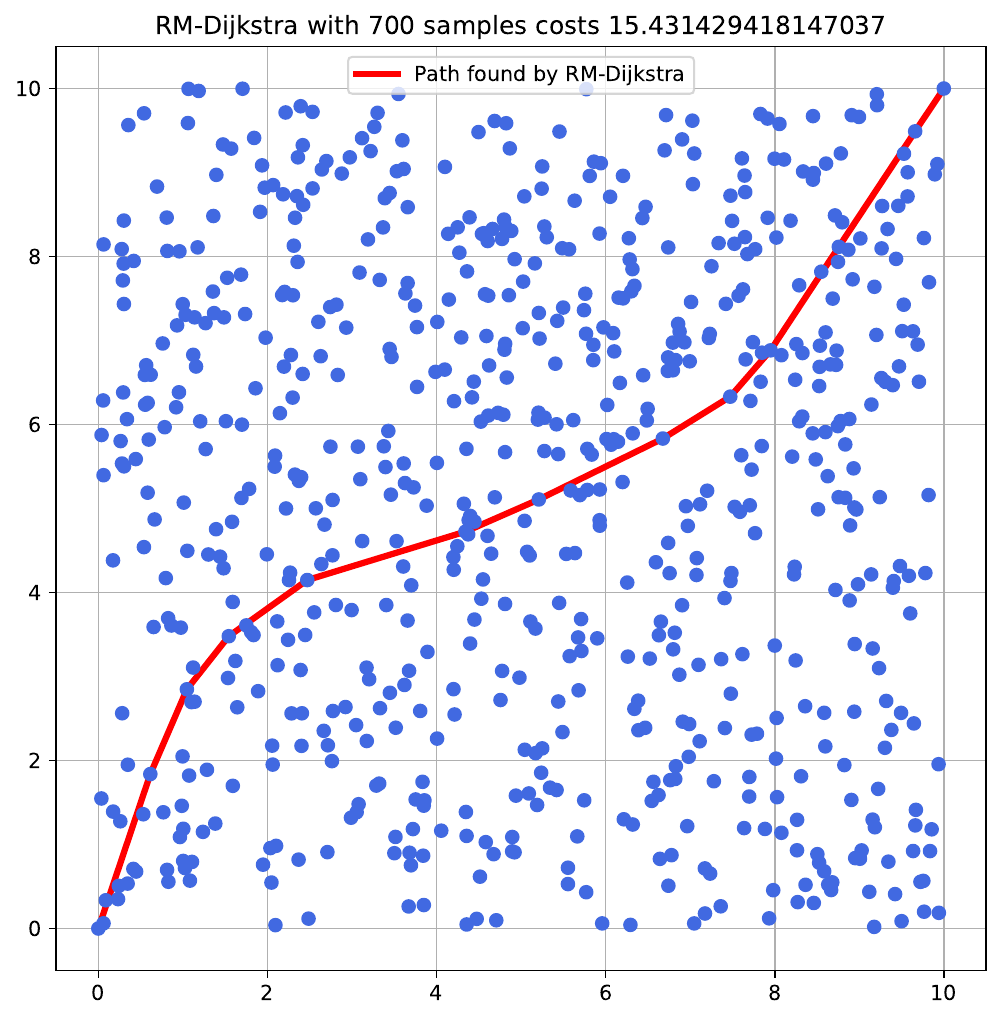}
      \label{3d3-2}
    }\\
    \subfloat[]{
      \includegraphics[width=0.5\linewidth]{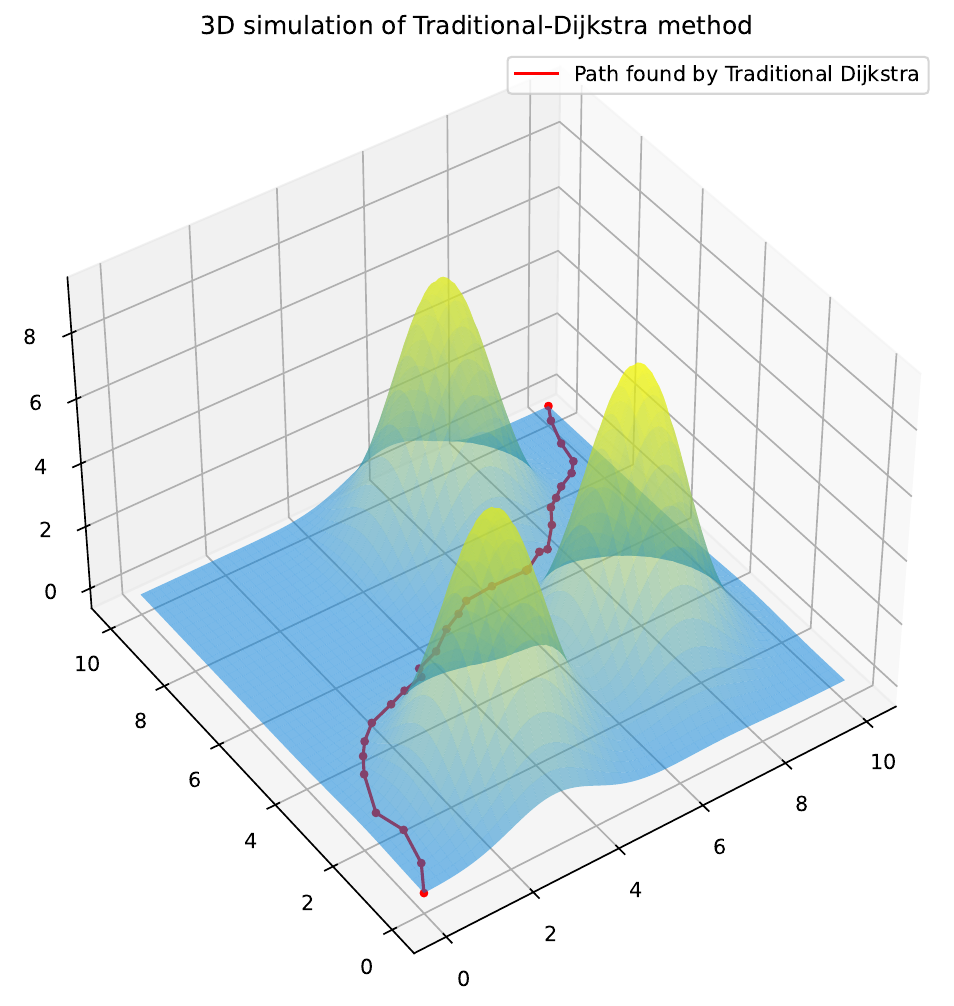}
      \label{3d3-3}
    }
    \subfloat[]{
      \includegraphics[width=0.5\linewidth]{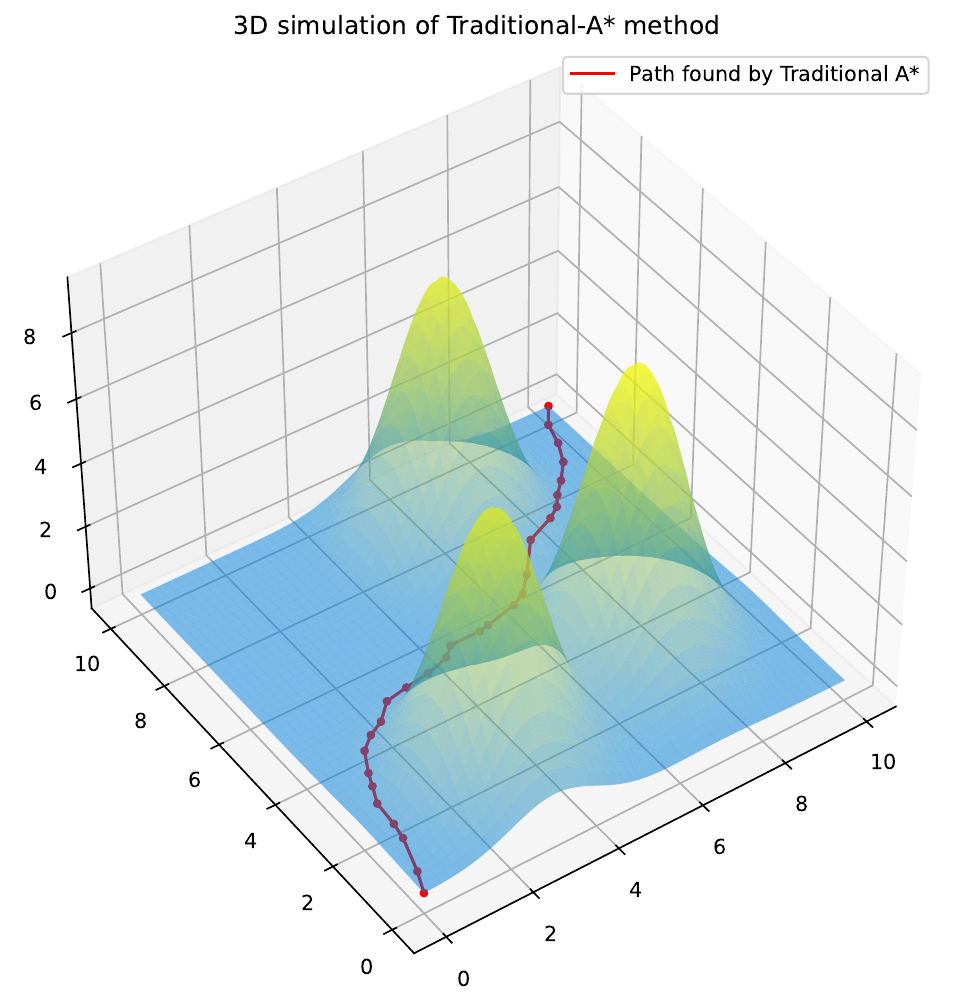}
      \label{3d3-4}
    }
   \caption{Simulation in 3-D scenario (three peaks). (b) is obtained by RM-Dijkstra taking 700 random sampling points. (a) is the preimage of the path in (b) under the projection map. (c) is obtained by original Dijkstra algorithm using Euclidean distance. (d) is obtained by original A* algorithm using Euclidean distance}
   \label{threepeaks}
\end{figure}

Next, we consider a more complex experimental scenario where we use the superimposed Gaussian function $x_3 = 8\cdot e^{ -\frac{1}{2}\left [(x_1-3)^2+(x_2-2)^2 \right ] } +  9\cdot e^{ -\frac{1}{2}\left [(x_1-7)^2+(x_2-3)^2 \right ] } +  8\cdot e^{ -\frac{1}{2}\left [(x_1-6)^2+(x_2-8)^2 \right ] } $ to simulate a surface with three peaks.
This time, the length of the path selected by RM-Dijkstra algorithm is 15.431429, as shown in Figure \ref{3d3-2} and \ref{3d3-1}.
This path, consisting of smooth curve segments, winds through the valleys between the three peaks, avoiding the peak areas with drastic changes in height, which is consistent with the intuitive requirement of the shortest path.
As comparative experiments, the length of the path retrieved by traditional Dijkstra algorithm (Figure \ref{3d3-3}) is 16.145669, and the length of the path retrieved by the traditional A* algorithm (Figure \ref{3d3-4}) is 16.087650.
The three paths follow a similar trajectory, but vary greatly in accuracy and smoothness.
However, when we use function $x_3 = 5\cdot e^{ -\frac{1}{2}\left [(x_1-3)^2+(x_2-2)^2 \right ] } +  5\cdot e^{ -\frac{1}{2}\left [(x_1-7)^2+(x_2-3)^2 \right ] } +  5\cdot e^{ -\frac{1}{2}\left [(x_1-3)^2+(x_2-7)^2 \right ] } +  5\cdot e^{ -\frac{1}{2}\left [(x_1-7)^2+(x_2-7)^2 \right ] } $ to simulate a more complex surface of four peaks (Figure \ref{fourpeaks}), we can see that the trend of the path retrieved by the traditional Dijkstra algorithm (Figure \ref{3d4-3}) has been greatly different from the trend of the path selected by the RM-Dijkstra algorithm (Figure \ref{3d4-1} and \ref{3d4-2}).
In the experimental scenario of the four-peak surface, traditional Dijkstra algorithm did not find a better way to reach the end point, but took a "detour".
This time, the path lengths retrieved by RM-Dijkstra algorithm, traditional Dijkstra algorithm, and traditional A* algorithm are 15.439869, 16.941380, and 17.027652, respectively.
Obviously, the accuracy advantage of RM-Dijkstra algorithm gradually increases with the increase of surface complexity.

The experimental results prove that RM-Dijkstra algorithm not only solves the optimal path planning problem of surface, but also has significant advantages over other traditional path planning algorithms in terms of path accuracy and smoothness when managing complex scenarios.

Solving the path planning problem by constructing a new Riemannian metric model is a challenging task, and related research in this area is limited. In this paper, we choose to model the raised peaks using a smooth normal distribution function. For cases involving non-smooth functions, we plan to explore them in future work, potentially utilizing smoothing techniques and other methods to address these challenges.

%四峰路径图%%%%%%%%%%%%%%%%%%%%%%%%%%%%%%%
\begin{figure}
   \centering
   \subfloat[]{
    \includegraphics[width=0.5\linewidth]{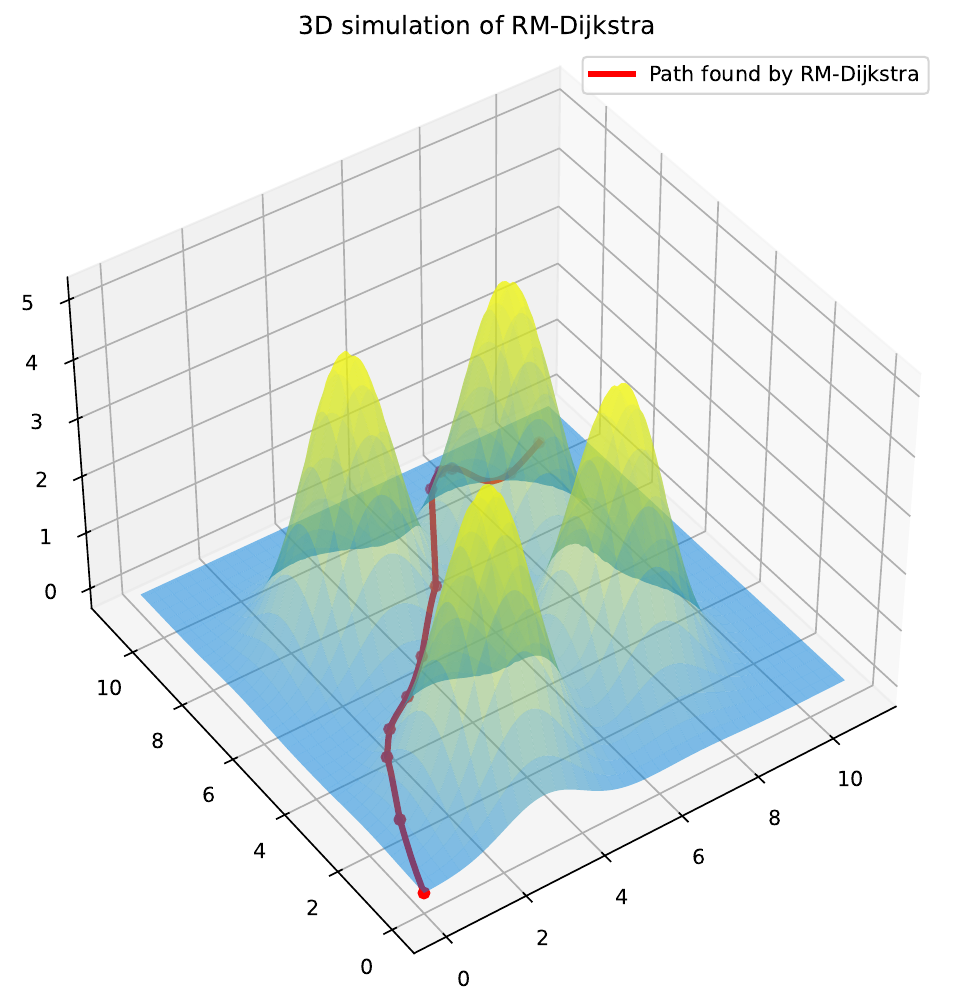}
    \label{3d4-1}
    }
   \subfloat[]{
      \includegraphics[width=0.47\linewidth]{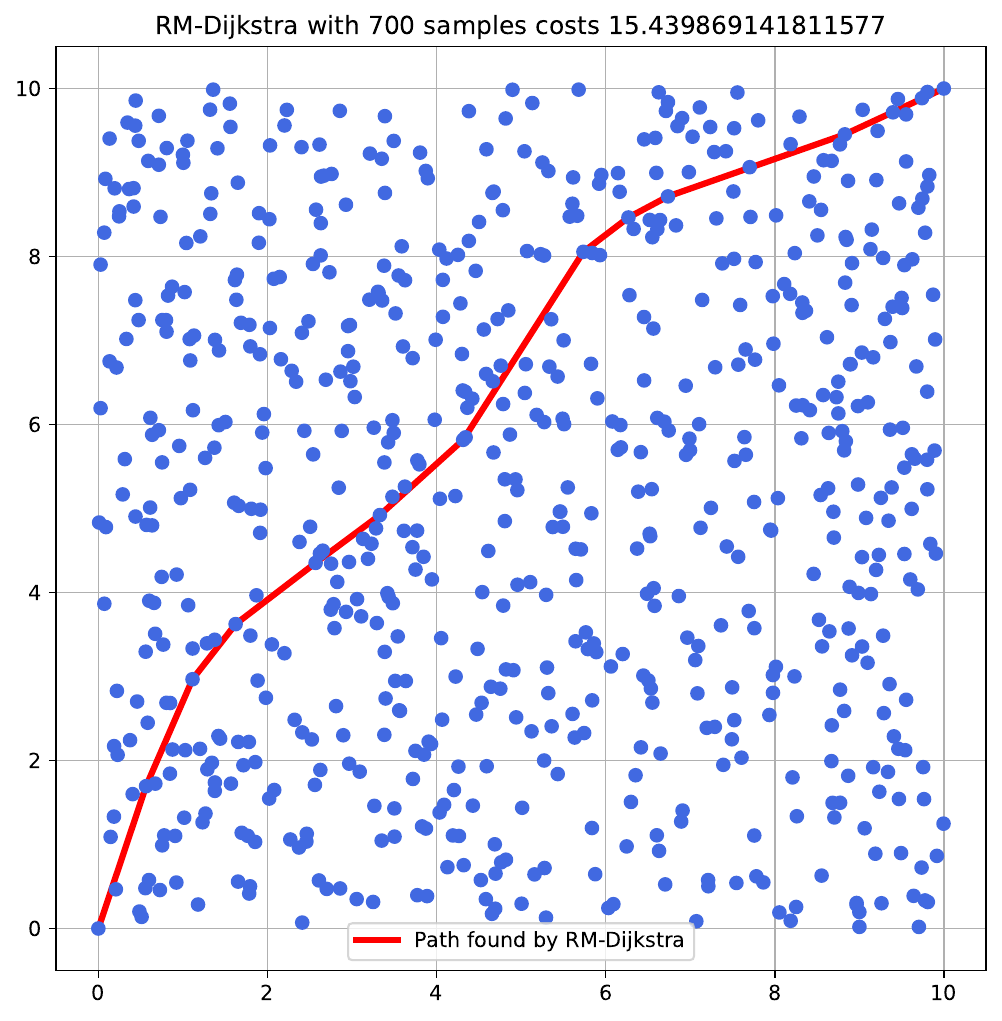}
      \label{3d4-2}
    }\\
    \subfloat[]{
      \includegraphics[width=0.5\linewidth]{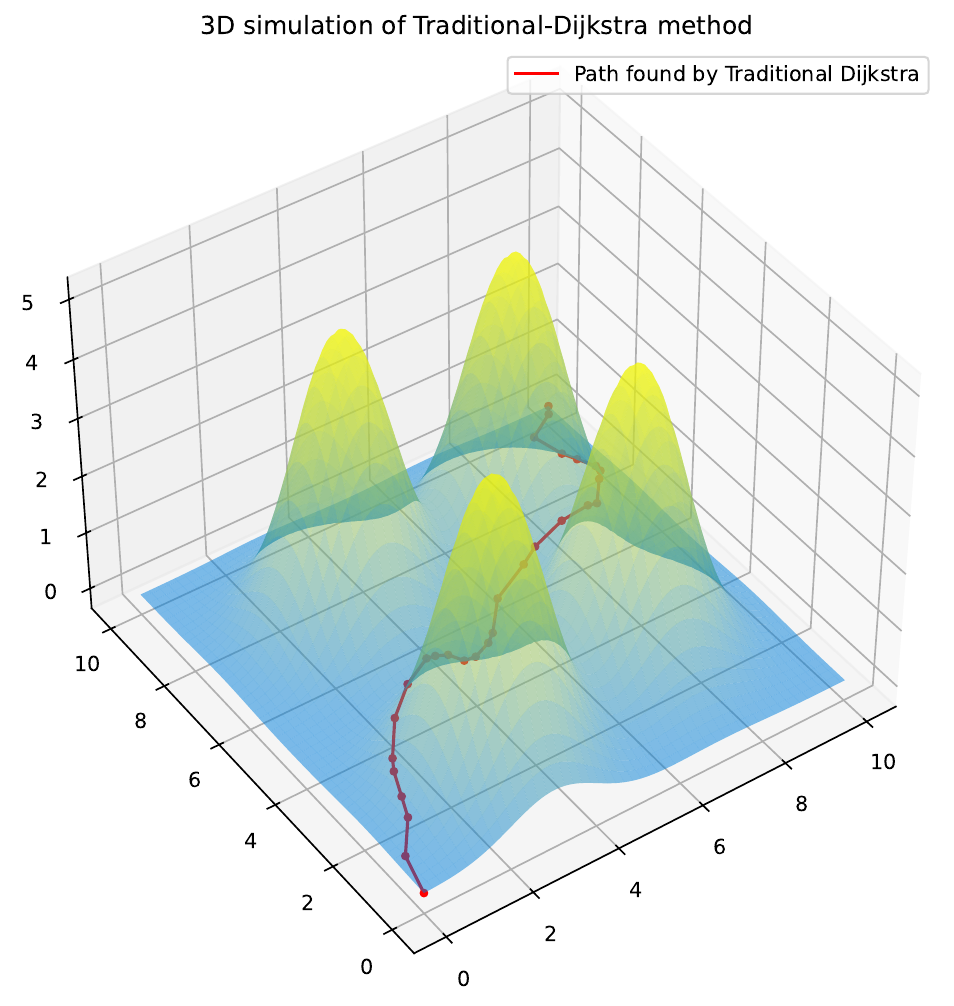}
      \label{3d4-3}
    }
    \subfloat[]{
      \includegraphics[width=0.5\linewidth]{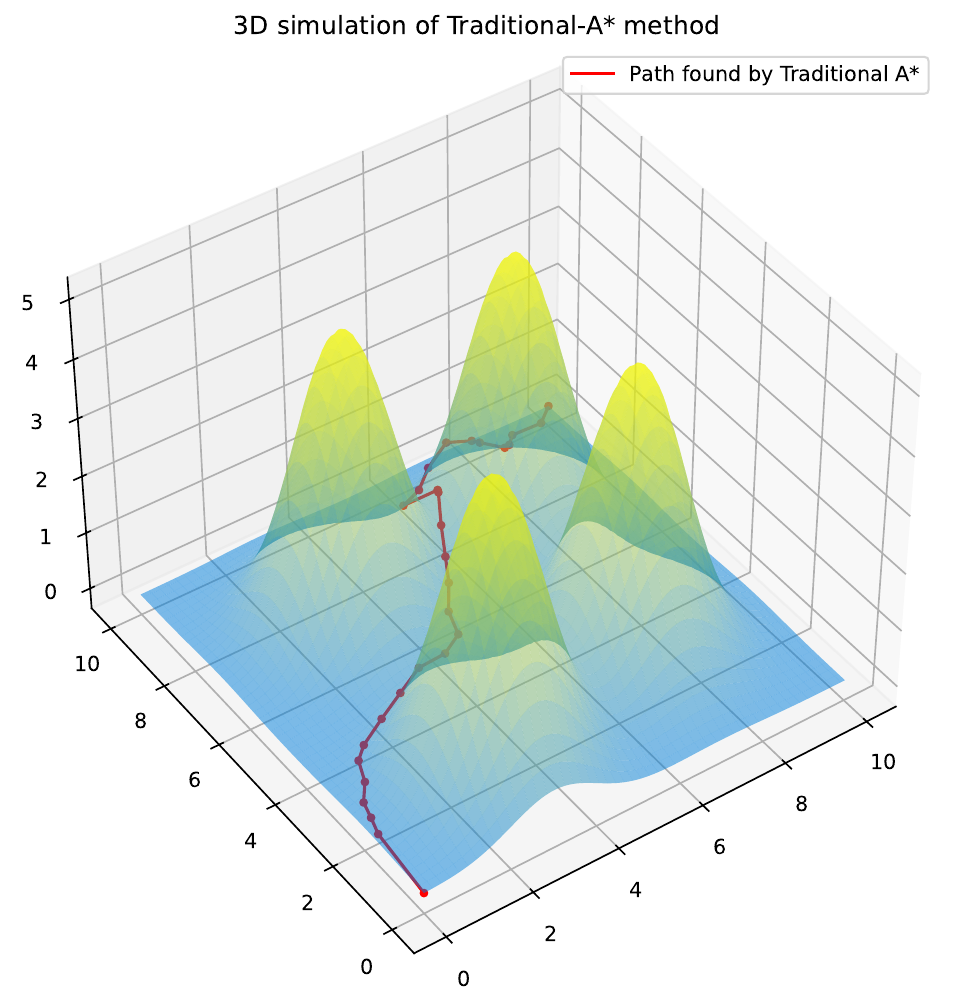}
      \label{3d4-4}
    }
   \caption{Simulation in 3-D scenario (four peaks). (b) is obtained by RM-Dijkstra taking 700 random sampling points. (a) is the preimage of the path in (b) under the projection map. (c) is obtained by original Dijkstra algorithm using Euclidean distance. (d) is obtained by original A* algorithm using Euclidean distance}
   \label{fourpeaks}
\end{figure}

\section{Conclusion}
\label{sec:Conclusion}
In this letter, an efficient optimal path planning framework is proposed for robot path planning on the 2D surfaces. By constructing a new Riemannian metric on the 2D projection plane, the surface optimal path planning problem is therefore transformed into a geometric problem on the 2D projection plane with new Riemannian metric. The projection map is an isometric embedding, which ensures that the length of a curve on surface is equal to the length of a new curve on its 2D projection plane in the new metric sense. Based on the above method, we propose a surface optimal path planning algorithm called RM-Dijkstra, which can accurately plan shortest smooth path from one point to target point on surfaces.
Through extensive simulation experiments, including scenarios with varying surface curvatures, we demonstrated the validity of the RM-Dijkstra algorithm. Comparative experiments with traditional path planning algorithms, such as the original Dijkstra and A* algorithms using Euclidean distance, showed that RM-Dijkstra significantly outperforms these methods in terms of path accuracy and piecewise smoothness, particularly in complex environments.
The results show the effectiveness of RM-Dijkstra in addressing the challenges of surface path planning problem and its potential for real-world applications where traditional algorithms may not perform as well. Future work will focus on further optimizing the algorithm for real-time applications and exploring its extension to more dynamic and complex robotic systems.

\vfill


\begin{thebibliography}{1}

\bibitem{1motionplanning}
E.~Kagan, N.~Shvalb, and I.~Ben-Gal, \emph{Autonomous mobile robots and multi-robot systems: motion-planning, communication, and swarming}.\hskip 1em plus 0.5em minus 0.4em\relax John Wiley \& Sons, 2019.

\bibitem{2handbookofrobotics}
B.~Siciliano, O.~Khatib, and T.~Kr{\"o}ger, \emph{Springer handbook of robotics}.\hskip 1em plus 0.5em minus 0.4em\relax Springer, 2008, vol. 200.

\bibitem{3ral}
Z.~Jian, S.~Zhang, S.~Chen, Z.~Nan, and N.~Zheng, ``A global-local coupling two-stage path planning method for mobile robots,'' \emph{IEEE Robotics and Automation Letters}, vol.~6, no.~3, pp. 5349--5356, 2021.

\bibitem{4autonomousdriving}
X.~Li, Z.~Sun, D.~Cao, Z.~He, and Q.~Zhu, ``Real-time trajectory planning for autonomous urban driving: Framework, algorithms, and verifications,'' \emph{IEEE/ASME Transactions on mechatronics}, vol.~21, no.~2, pp. 740--753, 2015.

\bibitem{5robotnavigation}
D.~E. Koditschek and E.~Rimon, ``Robot navigation functions on manifolds with boundary,'' \emph{Advances in applied mathematics}, vol.~11, no.~4, pp. 412--442, 1990.

\bibitem{6logisticsandwarehousing}
J.~Han, J.~Zhao, and R.~Li, ``Dynamic path planning of intelligent warehouse robot based on improved a* algorithm,'' \emph{Computer-Aided Design and Applications}, 2024. [Online]. Available: \url{https://api.semanticscholar.org/CorpusID:268584640}


\bibitem{7urbanplanningandtrafficmanagement}
J.~Chen, F.~Zhang, J.~Shi, and H.~Lv, ``Research on intelligent traffic dynamic path planning algorithm based on arcgis,'' in \emph{Proceedings of the 2024 8th International Conference on Big Data and Internet of Things}, 2024, pp. 221--225.

\bibitem{8gamedevelopment}
A.~Candra, M.~A. Budiman, and R.~I. Pohan, ``Application of a-star algorithm on pathfinding game,'' in \emph{Journal of Physics: Conference Series}, vol. 1898, no.~1.\hskip 1em plus 0.5em minus 0.4em\relax IOP Publishing, 2021, p. 012047.

\bibitem{9zongshu}
M.~Reda, A.~Onsy, A.~Y. Haikal, and A.~Ghanbari, ``Path planning algorithms in the autonomous driving system: A comprehensive review,'' \emph{Robotics and Autonomous Systems}, vol. 174, p. 104630, 2024.

\bibitem{10Dijkstra}
E.~W. Dijkstra, ``A note on two problems in connexion with graphs,'' \emph{Numerische Mathematik}, vol.~1, no.~1, 1959.

\bibitem{11Astar}
P.~E. Hart, N.~J. Nilsson, and B.~Raphael, ``A formal basis for the heuristic determination of minimum cost paths,'' \emph{IEEE Transactions on Systems Science and Cybernetics}, vol.~4, no.~2, pp. 28--29, 1972.

\bibitem{12UniversalOptimalityofDijkstra}
B.~Haeupler, R.~Hladik, V.~Rozhon, R.~E. Tarjan, and J.~Tetek, ``Universal optimality of dijkstra via beyond-worst-case heaps,'' in \emph{2024 IEEE 65th Annual Symposium on Foundations of Computer Science (FOCS)}.

\bibitem{13}
M.~Nazarahari, E.~Khanmirza, and S.~Doostie, ``Multi-objective multi-robot path planning in continuous environment using an enhanced genetic algorithm,'' \emph{Expert Systems with Applications}, vol. 115, pp. 106--120, 2019.

\bibitem{14CDT-Dijkstra}
J.~Liu, M.~Fu, W.~Zhang, B.~Chen, R.~Prakapovich, and U.~Sychou, ``Cdt-dijkstra: Fast planning of globally optimal paths for all points in 2d continuous space,'' in \emph{2023 IEEE/RSJ International Conference on Intelligent Robots and Systems (IROS)}.\hskip 1em plus 0.5em minus 0.4em\relax IEEE, 2023, pp. 2224--2231.

\bibitem{15liu2015robotic}
M.~Liu, ``Robotic online path planning on point cloud,'' \emph{IEEE transactions on cybernetics}, vol.~46, no.~5, pp. 1217--1228, 2015.

\bibitem{16kularatne2016time}
D.~Kularatne, S.~Bhattacharya, and M.~A. Hsieh, ``Time and energy optimal path planning in general flows.'' in \emph{Robotics: science and systems}.\hskip 1em plus 0.5em minus 0.4em\relax Ann Arbor, MI, 2016, pp. 1--10.

\bibitem{17riemanniangeometry}
M.~P. Do~Carmo and J.~Flaherty~Francis, \emph{Riemannian geometry}.\hskip 1em plus 0.5em minus 0.4em\relax Springer, 1992, vol.~2.

\bibitem{18riemannian}
L.~P. Eisenhart, \emph{Riemannian geometry}.\hskip 1em plus 0.5em minus 0.4em\relax Princeton university press, 1997, vol.~19.

\bibitem{19}
F.~Aziz, O.~Labbani-Igbida, A.~Radgui, and A.~Tamtaoui, ``A riemannian approach for free-space extraction and path planning using catadioptric omnidirectional vision,'' \emph{Image and Vision Computing}, vol.~95, p. 103872, 2020.


\end{thebibliography}
\end{document}